\newcommand{\x}{\mathbf{x}}
\newcommand{\X}{\mathbf{X}}
\newcommand{\EX}{\mathbb{E}}
\newtheorem{theorem}{Theorem}
\newtheorem{lemma}{Lemma}
\newtheorem{definition}{Definition}
\title{Probabilistic Circuits for Variational Inference in Discrete Graphical Models}
\author{%
  Andy Shih \\
  Computer Science Department\\
  Stanford University\\
  \texttt{andyshih@cs.stanford.edu} \\
  \And
  Stefano Ermon \\
  Computer Science Department\\
  Stanford University\\
  \texttt{ermon@cs.stanford.edu} \\
}
\begin{document}

\maketitle

\begin{abstract}
Inference in discrete graphical models with variational methods is difficult because of the inability to re-parameterize gradients of the Evidence Lower Bound (ELBO). Many sampling-based methods have been proposed for estimating these gradients, but they suffer from high bias or variance. In this paper, we propose a new approach that leverages the tractability of probabilistic circuit models, such as Sum Product Networks (SPN), to compute ELBO gradients exactly (without sampling) for a certain class of densities. In particular, we show that selective-SPNs are suitable as an expressive variational distribution, and prove that when the log-density of the target model is a polynomial the corresponding ELBO can be computed analytically.
To scale to graphical models with thousands of variables, we develop an efficient and effective construction of selective-SPNs with size \(O(kn)\), where \(n\) is the number of variables and \(k\) is an adjustable hyperparameter.
We demonstrate our approach on three types of graphical models -- Ising models, Latent Dirichlet Allocation, and factor graphs from the UAI Inference Competition. Selective-SPNs give a better lower bound than mean-field and structured mean-field, and is competitive with approximations that do not provide a lower bound, such as Loopy Belief Propagation and Tree-Reweighted Belief Propagation. Our results show that probabilistic circuits are promising tools for variational inference in discrete graphical models as they combine tractability and expressivity.
\end{abstract}
\section{Introduction}

Variational methods for inference have seen a rise in popularity due to advancements in Monte Carlo methods for gradient estimation and optimization~\citep{hoffman2013stochastic}.
The advent of black box variational inference~\citep{ranganath2013black} and the re-parameterization trick~\citep{Kingma2013AutoEncodingVB,Rezende2014StochasticBA,titsias2014doubly} have enabled the training of complex models with automatic differentiation tools~\citep{kucukelbir2017automatic}, and provided a low-variance gradient estimate of the ELBO for continuous variables.

We are interested in extending the success of variational methods to inference in discrete graphical models. Discrete graphical models are used in numerous applications, including physics, statistics, and machine learning tasks such as topic modelling and image segmentation.  One of the central problems of probabilistic inference in these graphical models is computation of the log partition function~\citep{jordan1999introduction,koller2009probabilistic}.
Exploring richer variational families beyond fully factored mean-field would enable a tighter bound of the log partition function, and in turn bound the probability of evidence.

Optimization of expressive variational families in discrete settings, however, is difficult because the re-parameterization trick for computing gradients does not extend naturally to discrete variables. Roughly speaking, the re-parameterization trick pushes away the randomness of sampling inputs, and instead applies a differentiable transformation onto a base distribution containing the randomness. 
Unfortunately, this cannot easily be extended to discrete variables because there is no differentiable transformation that maps to a discrete distribution. As such, a number of techniques have been proposed to make Monte Carlo gradient estimates work with discrete models. These techniques range from score function estimation~\citep{williams1992simple,mnih2016variational} to continuous relaxations~\citep{maddison2016concrete,Jang2017CategoricalRW,tucker2017rebar,Vahdat2018DVAEDV}, but they still suffer from high variance. Optimizing the ELBO for inference tasks using noisy gradients with high variance is impractical, especially in high dimensions.

Instead, we revisit the choice of Monte Carlo methods for evaluation and optimization of the ELBO of discrete models. We examine the alternative approach of computing the ELBO analytically, avoiding the high variance of Monte Carlo methods. 
Whereas Monte Carlo methods use a few representative samples for an efficient, unbiased, but high variance estimate, we exhaustively consider all possible inputs to get an exact gradient computation with zero bias or variance.
At first glance, such exact computation of the ELBO appears intractable, and seems to limit ourselves to variational distributions such as mean-field or possibly structured mean-field~\citep{jordan1999introduction, saul1996exploiting}. Fortunately, we can use tools from probabilistic circuits to bridge the gap between expressivity and tractability.

Probabilistic circuits have been used for state-of-the-art discrete density estimation~\citep{Gens2013LearningTS,Liang2017LearningTS}.
They are fully expressive -- they can express any density over discrete variables -- and gain their tractability properties by enforcing global constraints in the circuit structure, which come at the cost of succinctness~\citep{darwiche2002knowledge,bollig2019relative}. Varying degrees of structural constraints give rise to different families of probabilistic circuits, and can allow for polynomial time computation of marginals, moments, and more~\citep{Poon2011SumproductNA,KhosraviNeurIPS19}. For example, the constraint of decomposability alone leads to Sum-Product-Networks (SPNs), while the additional constraints of structured decomposability and prime partitioning lead to Probabilistic Sentential Decision Diagrams (PSDDs)~\citep{Poon2011SumproductNA,Kisa2014ProbabilisticSD,ProbCirc20}.

In this paper, we propose the use of probabilistic circuits as an expressive variational family for inference in discrete graphical models. We prove that for graphical models with polynomial log-density, the class of selective and decomposable circuits enables exact computation of the corresponding ELBO. In particular, we provide an algorithm to compute the ELBO in time linear in the size of the probabilistic circuit, which can then be combined with automatic differentiation to give exact gradients of the ELBO for optimization.

Selective and decomposable probabilistic circuits, otherwise known as selective-SPNs~\citep{Peharz2014LearningSS}, are more succinct than circuit families such as PSDDs. PSDDs have been shown to scale to various density estimation and classification tasks~\citep{Liang2017LearningTS,choi2017tractability}, and thus selective-SPNs can only scale better. For even greater scalability and ease of use, we present an effective linear time construction of selective-SPNs with size \(O(kn)\), where \(n\) is the number of variables and and \(k\) is an adjustable hyperparameter. Our construction enables us to scale to graphical models with thousands of variables and terms/factors. 

Our contributions are as follows:
\begin{itemize}
    \item For graphical models where the log density is a polynomial, we show that selective-SPNs can serve as an expressive variational distribution that enables exact ELBO computation. We prove that the corresponding ELBO can be computed in \(O(tm)\) time, where \(t\) is the number of terms in the polynomial and \(m\) is the size of the probabilistic circuit.
    \item We present an algorithm for constructing selective-SPNs of size \(O(kn)\), where \(n\) is the number of variables and \(k\) is an adjustable hyperparameter. Constructing selective-SPNs with linear dependency on \(n\) allows us to scale to graphical models with thousands of variables and terms/factors. Automating this construction enables easy integration of our framework into new inference tasks.
    \item We validate our approach on three sets of discrete graphical models: Ising models, Latent Dirichlet Allocation, and factor graphs from the UAI Inference Competition. We show that variational inference with selective-SPN gives better lower bounds of the log partition function than mean-field and structured mean-field variational inference. In addition, our approach is competitive with other approximation techniques that do not provide a lower bound, such as Loopy Belief Propagation and Tree-Reweighted Belief Propagation.
\end{itemize}

\section{Preliminaries}

\subsection{Variational Inference}

For many probabilistic models, we have access to unnormalized probability densities \(w(\x)\) but we cannot easily compute the partition function, or marginals in general~\citep{lecun2006tutorial,koller2009probabilistic}. For discrete models, the probability density and partition function can be written as follows:
\begin{align*}
    p(\x) = \frac{w(\x)}{Z} \quad,\quad Z = \sum_{\x \in \X}{w(\x)}
\end{align*}
There are many ways to estimate the partition function, and one such approach is variational inference, which poses approximate inference as an optimization problem~\citep{wainwright2008graphical,blei2017variational}. Variational inference works by first choosing a variational family of distributions \(q_\phi\) with tractable density evaluation and that we can easily sample from. Then we can rewrite \(Z\) with importance weights. Finally, to work with log probabilities, we take the log of both sides to get the Evidence Lower Bound (ELBO), where \(H(.)\) denotes the entropy of a distribution.
\begin{align*}
    Z = \EX_{\x \sim q_\phi} \Big[ \frac{w(\x)}{q_\phi(\x)} \Big] \quad, \quad
\log Z \geq \EX_{\x \sim q_\phi} \Big[ \log w(\x) \Big] + H(q_\phi) \label{eq:elbo}
\end{align*}
To get the best estimate of \(\log Z\), we maximize the ELBO by gradient descent with respect to parameters \(\phi\). The optimization can be done using black-box variational inference~\citep{ranganath2013black,kucukelbir2017automatic} by computing a noisy gradient from samples of the variational distribution using automatic differentiation tools. If \(\x\) is continuous and $q_\phi$ is re-parameterizable~\citep{Kingma2013AutoEncodingVB}, we can simply sample \(\x \sim q_\phi\), evaluate the ELBO, and backpropagate using the re-parameterization trick. However, for discrete \(\x\) we cannot backpropagate through samples unless we resort to continuous relaxations~\citep{maddison2016concrete,Jang2017CategoricalRW,tucker2017rebar,Vahdat2018DVAEDV} or use REINFORCE-style estimations~\citep{williams1992simple,mnih2016variational}, which introduce bias or variance.

We revisit the choice of Monte Carlo evaluation: can we obtain gradients without backpropagating through samples? That is, can we avoid sampling altogether, and pick a variational family of distributions that enables us to compute the ELBO analytically while maintaining full expressiveness? If so, we can backpropagate on the exact ELBO value to obtain its gradients with respect to \(\phi\), with zero bias and zero variance.
Simple variational families such as mean-field often allow us to compute the ELBO exactly, but they sacrifice expressiveness. To support analytic computation of the ELBO and maintain full expressiveness, we look towards probabilistic circuit models.

\subsection{Probabilistic Circuits}

\newcommand{\smalllinewidth}{0.6pt}
\newcommand{\midlinewidth}{1.0pt}
\newcommand{\midlinewidthx}{2.0pt}
\newcommand{\largelinewidth}{1.7pt}
\newcommand{\middist}{25pt}
\newcommand{\middistt}{20pt}
\newcommand{\middisttt}{28pt}
\newcommand{\largedist}{30pt}
\newcommand{\hugedist}{50pt}
\newcommand{\smalldist}{9pt}
\newcommand{\smalldistt}{4pt}
\newcommand{\tinydist}{5pt}
\newcommand{\intermiddist}{30pt}
\newcommand{\sqintermiddist}{15.5pt}
\newcommand{\halfdist}{19pt}

\begin{wrapfigure}{r}{0.4\textwidth}
\centering
\begin{tikzpicture}
  \prodnode[line width=\midlinewidth]{p6};
  \prodnode[line width=\midlinewidth, right=\middist of p6]{p7};  
  \sumnode[line width=\midlinewidth, above=\smalldist of p6, xshift=21pt]{s1};
  
  \sumnode[line width=\midlinewidth, below=\smalldist of p6]{s8};
  \sumnode[line width=\midlinewidth, below=\smalldist of p7]{s9};

  \prodnode[line width=\midlinewidth, below=\smalldist of s8]{p11};
  \prodnode[line width=\midlinewidth, below=\smalldist of s9]{p12};
  \prodnode[line width=\midlinewidth, left=\middist of p11]{p10};
  \prodnode[line width=\midlinewidth, right=\middist of p12]{p13};

  \varnode[line width=\midlinewidth, left=\middist of s8]{v16}{$X_3$};
  \varnode[line width=\midlinewidth, right=\middist of s9]{v17}{$X_4$};
  
  \varnode[line width=\midlinewidth, below=\smalldist of p10]{v18}{$X_1$};
  \varnode[line width=\midlinewidth, below=\smalldist of p11]{v19}{$X_2$};
  \varnode[line width=\midlinewidth, below=\smalldist of p12]{v20}{$X_1$};
  \varnode[line width=\midlinewidth, below=\smalldist of p13]{v21}{$X_2$};
  
  \weigedge[line width=\midlinewidth,left] {s1} {p6} {\scriptsize 0.3};
  \weigedge[line width=\midlinewidth,right] {s1} {p7} {\scriptsize 0.7};
  \edge[line width=\midlinewidth] {p6} {v16,s8,v17};
  \edge[line width=\midlinewidth] {p7} {v16,s9,v17};
  \weigedge[line width=\midlinewidth,left] {s8} {p10} {\scriptsize 0.5};
  \weigedge[line width=\midlinewidth,right] {s8} {p11} {\scriptsize 0.5};
  \weigedge[line width=\midlinewidth,left] {s9} {p12} {\scriptsize 0.9};
  \weigedge[line width=\midlinewidth,right] {s9} {p13} {\scriptsize 0.1};
  \edge[line width=\midlinewidth] {p10} {v18,v19};
  \edge[line width=\midlinewidth] {p11} {v20,v19};
  \edge[line width=\midlinewidth] {p12} {v21,v18};
  \edge[line width=\midlinewidth] {p13} {v20,v21};
  
\end{tikzpicture}
\caption{A decomposable probabilistic circuit over \(4\) variables. 
If \(X_1\) leaf nodes are disjoint and \(X_2\) leaf nodes are disjoint, then the circuit is also selective.
}
\label{fig:spn}
\end{wrapfigure}
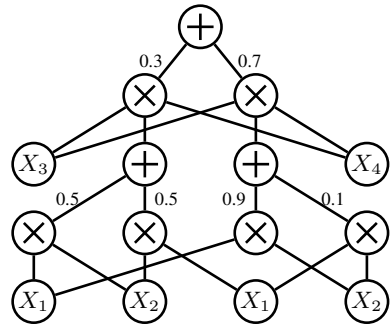

Probabilistic circuits are directed acyclic graphs (DAGs) over input variables \(\X\) containing \emph{sum}, \emph{product}, and \emph{leaf} nodes~\citep{Darwiche2000ADA, Poon2011SumproductNA}. Each child edge of a sum node has a corresponding weight, and each leaf node has a distribution over an input variable. For simplicity we assume the leaf nodes are univariate, but they can be multivariate in general~\citep{trapp2019deep}. An example is shown in Figure~\ref{fig:spn}.

Probabilistic circuits can be viewed as computation graphs that feed inputs into the leaf nodes, take weighted sums at sum nodes and take products at product nodes. The value obtained at the root node of the graph is treated as the output of the circuit. More formally, for a leaf node \(i\)
we let \(l_i\) denote its leaf distribution. For an internal node \(i\) we let \(ch(i)\) denote the set of its children. For a sum node \(i\) and one of its children \(j\), we let \(a_{ij}\) denote the edge weight from node \(i\) to \(j\). Then, the distribution \(g_i(\x)\) of node \(i\) on an input \(\x\) can be expressed as follows:
\begin{align*}
    g_i(\x) = 
    \begin{cases} 
      l_i(\x) & i \text{ is leaf node} \\
      \sum_{j \in ch(i)}{a_{ij} g_j(\x)} & i \text{ is sum node} \\
      \prod_{j \in ch(i)}{g_j(\x)} & i \text{ is product node}
   \end{cases}
\end{align*}
When the leaf nodes and the edge weights are positive, the output of the circuits are also positive, so they can be used to encode a probability density. Probabilistic circuits are useful for modeling densities because they support efficient sampling and can compute exact likelihoods, marginals, and more when they satisfy some of the following properties~\citep{Friesen2016TheST}.

\begin{definition}
Let \(vars_i\) denote the set of variables that appear at or below node \(i\). A probabilistic circuit is \textbf{decomposable} if for every product node \(d\), the variable sets of its children are pairwise disjoint: for all \(i, j \in ch(d)\), with \(i \neq j\), we have that \(vars_{i} \cap vars_{j} = \emptyset\).
\end{definition}
\begin{definition}
A probabilistic circuit is \textbf{selective / deterministic} if under any input \(\x\), at most one child of each sum node evaluates to non-zero.
\end{definition}

Furthermore, there are stronger properties such as structured decomposability, prime-partitioning, that enable more advanced operations to be computed in polynomial time on the size of the circuit~\citep{Kisa2014ProbabilisticSD,ShenCD16}. Different subsets of these properties lead to different families of probabilistic circuits, such as PSDDs, SPNs~\citep{Poon2011SumproductNA,Kisa2014ProbabilisticSD}, and selective-SPNs~\citep{Peharz2014LearningSS}.

Since the internal nodes are sums and products, probabilistic circuits can naturally be seen as a compact representation of a polynomial over the leaf nodes. Common choices of leaf distributions are Gaussians for continuous variables, and categorical for discrete variables~\citep{Poon2011SumproductNA}. It is then easy to see that these circuits are fully expressive and can approximate any density, by viewing them as hierarchical mixture models~\citep{Peharz2017OnTL,vergari2019visualizing}. 

The size of a probabilistic circuit refers to the number of edges in the circuit. In the rest of this paper, we will assume probabilistic circuits of size \(m \geq n\), the number of variables, and that the probabilistic circuits are smooth (i.e. children of each sum node cover the same set of variables~\citep{darwiche2002knowledge}). We also assume that edge weights under any sum node adds up to \(1\), since this is enforceable with \(O(m)\) operations~\citep{Peharz2017OnTL}. Lastly, we assume the discrete variables are binary, taking on values in either \(\{-1,1\}\) or \(\{0,1\}\), but our framework can be easily extended to general discrete settings by using discrete leaf distributions in the probabilistic circuit.

\section{Probabilistic Circuits as the Variational Distribution} \label{sec:pcvi}
In this section, we explore the use of probabilistic circuits as a variational distribution -- we want to optimize the model parameters to most closely match a target distribution.
Although probabilistic circuits have strong tractability properties for sampling and evaluation, it is unclear how to compute the exact ELBO for arbitrary choices of density \(w\). For example, when the probabilistic circuit encodes a uniform distribution, computing the ELBO is as hard as summing out \(\log w(\x)\), which can be designed to be \#P-hard if there are no restrictions on \(w\)~\citep{khosravi2019expect}. However, we show that for models whose log density can be written as a polynomial, the corresponding ELBO with respect to a selective-SPN can be computed analytically.

\subsection{Exact Computation of ELBO}

Exponential-family models where the log density is a polynomial commonly arise when working with graphical models~\citep{koller2009probabilistic,hoffman2018autoconj}. This refers to unnormalized probability densities written as:
\begin{align*}
    v(\x) = \sum_{f \in F}{f(\x)}  \quad,\quad  
    w(\x) = e^{v(\x)}
\end{align*}
where each factor \(f\) is a monomial (a single-term polynomial). 
For discrete models, this form can actually express any density, but with possibly exponentially many factors, by considering the Discrete Fourier Transform~\citep{o2014analysis}.

\begin{lemma}
Every function \(v : \{-1, 1\}^n \rightarrow \mathbb{R}\) can be expressed as a polynomial~\citep{o2014analysis}.
\end{lemma}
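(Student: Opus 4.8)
The plan is to give a direct, constructive proof by polynomial interpolation over the Boolean cube, exploiting the crucial fact that on the domain \(\{-1,1\}\) every coordinate satisfies \(x_i^2 = 1\). First I would construct, for each point \(\mathbf{a} \in \{-1,1\}^n\), an \emph{indicator polynomial}
\[
\delta_{\mathbf{a}}(\x) \;=\; \prod_{i=1}^n \frac{1 + a_i x_i}{2},
\]
which is manifestly a polynomial in \(x_1, \dots, x_n\).

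Next I would verify that \(\delta_{\mathbf{a}}\) acts as a Kronecker delta on the cube. At a coordinate where \(x_i = a_i\), the factor equals \(\tfrac{1 + a_i^2}{2} = 1\) since \(a_i^2 = 1\); at a coordinate where \(x_i = -a_i\), the factor equals \(\tfrac{1 - a_i^2}{2} = 0\). Hence \(\delta_{\mathbf{a}}(\x) = 1\) when \(\x = \mathbf{a}\) and \(\delta_{\mathbf{a}}(\x) = 0\) for every other cube point \(\x \neq \mathbf{a}\).

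With these building blocks I would define the candidate polynomial
\[
p(\x) \;=\; \sum_{\mathbf{a} \in \{-1,1\}^n} v(\mathbf{a})\, \delta_{\mathbf{a}}(\x),
\]
a finite sum of polynomials and therefore itself a polynomial. Evaluating at any \(\x = \mathbf{b} \in \{-1,1\}^n\), all terms vanish except the \(\mathbf{a} = \mathbf{b}\) term, giving \(p(\mathbf{b}) = v(\mathbf{b})\). Thus \(p\) and \(v\) agree on the entire domain, which establishes the claim.

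I do not expect a serious obstacle: the argument is fully constructive, and the only point requiring care is the evaluation of \(\delta_{\mathbf{a}}\), which hinges on \(x_i^2 = 1\) --- this is exactly where restricting to \(\{-1,1\}\) (rather than an arbitrary finite domain) keeps the construction clean. I would additionally remark that expanding each product and collapsing repeated variables via \(x_i^2 = 1\) yields a \emph{multilinear} representation, and a standard dimension count (the \(2^n\) monomials \(\prod_{i \in S} x_i\) against a \(2^n\)-dimensional space of functions) shows this multilinear form is in fact unique, though uniqueness is not needed for the statement.
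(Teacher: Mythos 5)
Your proof is correct: the indicator-polynomial interpolation argument is exactly the standard construction from the cited reference (O'Donnell, \emph{Analysis of Boolean Functions}), and the paper itself offers no independent proof, deferring entirely to that citation. The verification that \(\delta_{\mathbf{a}}\) is a Kronecker delta on the cube and the resulting multilinearity remark are both accurate, so nothing further is needed.
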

The same can be shown for variables taking values in \(\{0,1\}\).
In practice, many factor graphs such as Ising models have unnormalized probability densities that can be written as a polynomial with few terms.
We would like to bound the partition function of these models using variational inference. This involves the following ELBO computation.
\begin{align*}
    \log Z \geq \EX_{\x \sim q_\phi} \Big[ \sum_{f \in F}{f(\x)} \Big] + H(q_\phi)
    = \sum_{f \in F} \EX_{\x \sim q_\phi} \Big[ f(\x) \Big] + H(q_\phi)
\end{align*}
In other words, we need to compute the expectation of each monomial with respect to our variational distribution \(q_\phi\), and compute the entropy \(H(q_\phi)\). In general these are not easy, but we can leverage some of the properties of probabilistic circuits to enable this computation.

\begin{theorem}
Given a discrete model \(p\) whose log density can be written as a polynomial, and a variational distribution \(q_\phi\) that is a selective-SPN, the ELBO of the partition function of \(p\) can be computed in time \(O(tm)\), where \(t\) is the number of terms in the polynomial and \(m\) is the size of \(q_\phi\).
\end{theorem}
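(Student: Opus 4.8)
The plan is to split the ELBO into its two constituent pieces and show that each can be evaluated by a single bottom-up pass over the circuit. Since the ELBO equals \(\sum_{f \in F} \EX_{\x \sim q_\phi}[f(\x)] + H(q_\phi)\), it suffices to establish (i) that each of the \(t\) monomial expectations \(\EX_{\x \sim q_\phi}[f(\x)]\) is computable in \(O(m)\) time, and (ii) that the entropy \(H(q_\phi)\) is computable in \(O(m)\) time. Adding a constant-time coefficient scaling for each monomial and summing over the \(t\) terms then gives the claimed \(O(tm)\) bound.

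For the monomial expectations I would first reduce to multilinear monomials: since the variables are binary, \(x_i^2\) equals either \(1\) or \(x_i\), so every factor is equivalent to \(c \prod_{i \in S} x_i\) for some scope \(S\) and constant \(c\). I would then compute \(\EX_{\x \sim q_\phi}[\prod_{i \in S} x_i]\) by a bottom-up evaluation mirroring the circuit's own recursion. At a leaf node for variable \(X_i\), I record the first moment \(\EX_{l_i}[x_i]\) if \(i \in S\) and the constant \(1\) otherwise. At a sum node I take the weighted combination \(\sum_{j \in ch(i)} a_{ij}(\cdot)\), which is valid by linearity of expectation (smoothness guarantees the children share the same scope, so the combination is a well-defined distribution). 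The crucial step is at product nodes: decomposability guarantees pairwise disjoint children scopes, so the joint distribution factorizes across children and the monomial splits into independent factors \(\prod_{i \in S \cap vars_j} x_i\); hence the value at a product node is simply the product of its children's recorded values. The value at the root is the desired expectation, computed in a single \(O(m)\) pass, so all \(t\) monomials cost \(O(tm)\).

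For the entropy I would again evaluate bottom-up, but here selectivity does the heavy lifting. At a leaf node the entropy of \(l_i\) is read off directly. At a product node, decomposability makes the children independent, so entropy is additive: \(H(g_d) = \sum_{j \in ch(d)} H(g_j)\). The key observation is at sum nodes: selectivity means that for every input at most one child is non-zero, so the children distributions \(g_j\) have pairwise disjoint supports. A mixture of disjoint-support components satisfies \(H(g_i) = -\sum_{j} a_{ij}\log a_{ij} + \sum_{j} a_{ij} H(g_j)\), namely the entropy of the mixing weights plus the weighted sum of the children's entropies. I would verify this by expanding \(-\sum_{\x} g_i(\x)\log g_i(\x)\) and using that each \(\x\) lies in the support of exactly one child, so that \(\log g_i(\x) = \log a_{ij} + \log g_j(\x)\) on that child's support. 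Applying this recursion over the whole circuit yields \(H(q_\phi)\) in one \(O(m)\) pass.

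I expect the entropy computation to be the main obstacle and the place where selectivity is indispensable. For a general, non-selective SPN a sum node produces a genuine mixture whose density is a sum of overlapping components, and the \(\log\) of that sum does not distribute over the children; computing such entropies is intractable in general. Selectivity is precisely the structural property that forces disjoint supports and turns the log-of-a-sum into a sum-of-logs on each component, giving the clean recursion above. By contrast, the monomial expectations require only decomposability, so selectivity is not needed there. Establishing and justifying the disjoint-support mixture-entropy identity is therefore the crux of the argument.
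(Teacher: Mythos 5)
Your proposal is correct and follows essentially the same route as the paper's proof: a bottom-up pass per monomial using linearity at sum nodes and decomposability at product nodes, and a single entropy pass where selectivity turns each sum node into a disjoint-support mixture whose entropy splits into the mixing-weight entropy plus the weighted children entropies. Your added remark on reducing powers of binary variables to multilinear monomials is a harmless refinement the paper leaves implicit.
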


\begin{proof}
First, we show that the expectation of each monomial \(f\) with respect to \(q_\phi\) can be computed in \(O(m)\). We write \(x \in f\) if variable \(x\) appears in the monomial \(f\), and assume \(f\) has no coefficient for the moment. We consider the \(3\) types of nodes, and let \(g_i\) be the distribution induced by node \(i\).

At a sum node \(i\) (left), we have linearity of expectations. At a product node \(i\) (right), since children are decomposable and have disjoint variable sets, they are independent and the expectation decomposes.
\begin{align*}
    \EX_{\x \sim g_i} \Big[ f(\x) \Big] = \sum_{j \in ch(i)} { a_{ij} \EX_{\x \sim g_j} \Big[ f(\x) \Big] } \quad\quad\quad\quad
    \EX_{\x \sim g_i} \Big[ f(\x) \Big] = \prod_{j \in ch(i)} { \EX_{\x \sim g_j} \Big[ f(\x) \Big] }
\end{align*}
Finally, at a leaf node \(i\) with univariate distribution over variable \(x\) with mean \(\mu\):
\begin{align*}   \EX_{x \sim g_i} \Big[ f(x) \Big] = \begin{cases} 
      \mu & \text{ if } x \in f \\
      1 & \text{ otherwise}
   \end{cases}
\end{align*}
The base case does not evaluate \(f\) directly because we ignored the coefficient of the monomial \(f\). As a final step, we multiply the result by the original coefficient of \(f\). Since computation of each node takes linear time in the number of children, the expectation of a monomial takes \(O(m)\) operations, leading to \(O(tm)\) operations in total. Next we show that the entropy term can be computed in \(O(m)\).
At a sum node, we have the following due to selectivity:
\begin{align*}
    - \EX_{\x \sim g_i} \Big[ \log g_i(\x) \Big]
    &=& -& \sum_{j \in ch(i)} \alpha_{ij} \EX_{\x \sim g_j} \Big[ \log \sum_{k \in ch(i)} \alpha_{ik} g_k(\x) \Big] \\
    = - \sum_{j \in ch(i)} \alpha_{ij} \EX_{\x \sim g_j} \Big[ \log \alpha_{ij} g_j(\x) \Big]
    &=& -& \sum_{j \in ch(i)} \alpha_{ij} \log \alpha_{ij} + \alpha_{ij} \EX_{\x \sim g_j} \Big[ \log g_j(\x) \Big]
\end{align*}
Because of selectivity, at most one child can evaluate an input to non-zero. Therefore, the expectation of an input sampled from the distribution of one child with respect to another child must be zero, hence we can simplify the summation inside the expectation.

At a product node, decomposability again implies independence between children, so the entropy is the sum of the entropy of the children.
\begin{align*}
    - \EX_{\x \sim g_i} \Big[ \log g_i(\x) \Big]
    = - \sum_{j \in ch(i)} \EX_{\x \sim g_j} \Big[ \log g_j(\x) \Big]
\end{align*}
Lastly, we can compute the entropy of the Bernoulli leaf nodes in constant time.
\end{proof}

We have shown that we can tractably compute the corresponding ELBO (and therefore its gradients via automatic differentiation) for selective-SPNs and polynomial log-densities. Additionally, the computation of the entropy of selective-SPNs in linear time may be of independent interest, for tasks beyond the computation of the ELBO.

\section{Efficient Construction of Selective-SPNs}

\SetKwInput{KwInput}{Input}                
\SetKwInput{KwConstraint}{Constraint}      
\SetKwInput{KwOutput}{Output}              
\SetKwInput{KwReturn}{Return}              

\def\literal{\textsc{Literal}}
\def\sumnode{\textsc{SumNode}}
\def\cartprod{\textsc{CartesianProduct}}

In the interest of scalability and ease of use, we now describe an algorithm for constructing selective-SPNs of size linear in the number of variables. Our algorithm generates selective and decomposable probabilistic circuits that can then be plugged in as the variational distribution for discrete graphical models, as shown in Section~\ref{sec:pcvi}. Essentially, we are constructing the structure of the selective-SPN without considering the target model, in a similar spirit to RAT-SPNs~\cite{peharz2018probabilistic}.
These selective-SPNs are more expressive than mean-field or structured mean-field variational families, which will be reflected in the experimental results in Section~\ref{sec:experiments}.

Our algorithm takes as input the number of variables \(n\) and an integer \(k\) denoting the size budget of the selective-SPN, and outputs a selective-SPN over \(n\) variables with size \(O(kn)\) (see Algorithm~\ref{alg:spn}). The parameter \(k\) can be chosen to balance the trade-off between expressiveness and size of the selective-SPN, where setting \(k=1\) gives a fully-factored distribution.

Algorithm~\ref{alg:spn} can be described as follows, proceeding layer-wise and keeping track of the number of partitions and the number of nodes per partition at each layer. At the leaf layer, we start with \(n\) variable partitions and \(c=2\) nodes per partition; \(\literal\) returns deterministic leaf nodes \(0\) and \(1\) for each variable. At a product layer, \(\cartprod\) creates \(c^2\) product nodes for every pair of partitions, where \(c\) is the number of nodes of a partition from the previous layer. As a result, the number of partitions are halved. At a sum layer, \(\sumnode\) creates a sum node from an array of children nodes. We connect each child to one sum node, and set the number of sum nodes based on the size parameter \(k\). We repeat sum and product layers until there is only one partition and one node. The total number of nodes is a function of the number of variables and the size parameter \(k\).

\begin{theorem}
Algorithm~\ref{alg:spn} constructs a selective-SPN of size \(O(kn)\) in time \(O(kn)\).
\end{theorem}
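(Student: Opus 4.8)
The plan is to prove three things in sequence: that the output is a valid selective-SPN, that its size is $O(kn)$, and that the construction runs in $O(kn)$ time. The first is established by induction on the layers, and the latter two by a geometric-series count.

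First I would prove correctness by maintaining two invariants, inducting from the leaf layer up to the root. The structural invariant is that the node groups at every layer form a genuine \emph{partition} of $\{X_1,\dots,X_n\}$ (disjoint variable scopes whose union is everything): this holds at the leaf layer, where each group is a single variable, and is preserved by \cartprod, which pairs two groups of disjoint scope and assigns the union as the new scope. This is exactly decomposability for every product node, and smoothness for every sum node (whose children all carry one group's scope). The support invariant is that within each group the nodes have pairwise disjoint supports, i.e. disjoint sets of assignments on which they are non-zero. I would verify it at the leaves, where the two deterministic literals $[X_i=0]$ and $[X_i=1]$ have disjoint support, and show it propagates: a product node's support factorizes across its two children's disjoint scopes, so two distinct product nodes differ in at least one factor and are therefore disjoint; and a sum node's children are a subset of these pairwise-disjoint product nodes, so at most one fires on any input, which is precisely selectivity (and the surviving sum nodes remain pairwise disjoint, closing the induction). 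Establishing this support invariant, hence selectivity, is the conceptual heart of the correctness argument.

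Next I would bound the size. Since \cartprod halves the number of groups, there are $O(\log n)$ layers and the group counts run $n, n/2, n/4, \dots, 1$. The quantitative crux is that, although \cartprod creates $c^2$ product nodes out of groups of size $c$, the following sum layer caps the per-group count using $k$, so that each group carries only $O(k)$ product nodes and $O(k)$ edges at \emph{every} layer: each product node contributes its two child-edges, and each child attaches to exactly one sum node. Summing the per-group cost against the geometric series of group counts gives $O(k)\sum_{\ell}(\text{groups at layer }\ell) = O(k)\cdot O(n) = O(kn)$, using $n + n/2 + n/4 + \cdots = O(n)$. The main obstacle here is precisely controlling this $c^2$ Cartesian blow-up: I must argue that the sum-layer cap keeps $c$ bounded in terms of $k$ at every layer (not merely in the limit), so that no layer contributes more than $O(k)$ per group and the series does not accumulate an extra factor.

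Finally, the runtime bound follows because the algorithm performs only $O(1)$ work per node and per edge it emits -- allocating a node, wiring a child, or forming one Cartesian pair -- plus $O(\log n)$ bookkeeping per layer, so the total time is proportional to the number of edges constructed, which is $O(kn)$ by the size bound. I would close by checking the degenerate case $k=1$, which collapses each sum layer to a single node and recovers the fully-factored distribution, as a sanity check on the counting.
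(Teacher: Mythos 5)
Your proposal is correct and follows essentially the same route as the paper's proof: the same two invariants (disjoint variable scopes per partition giving decomposability, and pairwise-disjoint supports within each partition giving selectivity via the Cartesian product), the same geometric-series count $k(n + n/2 + \cdots + 1) = O(kn)$ for the size, and the same constant-work-per-edge argument for the runtime. The extra details you supply (why two distinct product nodes have disjoint support, smoothness, the $k=1$ sanity check) are welcome elaborations but do not change the argument.
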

The proof is in the Appendix and contains more details and explanation of the algorithm.

\IncMargin{1.5em}

\begin{algorithm}
\caption{Constructing a Selective-SPN}
\label{alg:spn}
\SetAlgoLined
\DontPrintSemicolon
\Indm
\KwInput{An integer \(n\) (the number of variables), and an integer \(k\) (the size budget for the SPN).}
\KwConstraint{Integer \(n\) is a power of \(2\). Integer \(k\) is an even power of \(2\).}
\KwOutput{A selective-SPN over \(n\) variables with size \(O(kn)\).}
    \Indp
    \(D, c \gets \{\}, 2\)\\
    \For(\tcc*[f]{leaf nodes}){\(i \gets [0,n)\)}{
        \(D[2i] \gets \literal(-(i+1))\)\\
        \(D[2i+1] \gets \literal(+(i+1))\)
    }
    \While{\(n > 1\)}{
        \(D^\prime \gets \{\}\)\\
        \eIf(\tcc*[f]{sum nodes}){\(c > k^{1/2}\)} {
            \(r \gets c / k^{1/2}\)\\
            \For{\(i \gets [0,cn/r)\)}{
                \(D^\prime[i] \gets \sumnode(D[ri : r(i+1) ])\)\\
            }
            \(c \gets c/r\)\\
        }
        (\tcc*[f]{product nodes}){
            \For{\(i \gets [0,n/2)\)}{
                \(d_1 \gets D[2c(i+0) : 2c(i+1)]\) \\
                \(d_2 \gets D[2c(i+1): 2c(i+2)]\) \\
                \(D^\prime[c^2 i : c^2 (i+1)] \gets \cartprod(d_1, d_2)\)\\
            }
            \(c, n \gets c^2, n/2\)\\
        }
        \(D \gets D^\prime\)
    }
    \lIf{\(c > 1\)}{ \(D[0] \gets \sumnode(D[0 : c])\) }
\Indm
\KwReturn{\(D[0]\)}
\end{algorithm}

\section{Experiments}
\label{sec:experiments}

We experimentally validate the use of selective-SPNs in variational inference of discrete graphical models. We use Algorithm~\ref{alg:spn} to construct selective-SPNs, compute the exact ELBO gradient, and optimize the parameters of the selective-SPN (the edges under sum nodes) with gradient descent. Each ELBO computation takes \(O(tm)\) operations. Since the selective-SPNs have size \(O(kn)\), each optimization iteration takes \(O(tkn)\) steps, where \(t\) is the number of terms when expressing the log density of the graphical model as a polynomial, \(k\) is a hyperparameter denoting the size budget, and \(n\) is the number of binary variables. We present results on computing lower bounds of the log partition function of three sets of graphical models: Ising models, Latent Dirichlet Allocation, and factor graphs from the UAI Inference Competition. Experiments were run on a single GPU. Code can be found at \url{https://github.com/AndyShih12/SPN_Variational_Inference}. 

\subsection{Ising Models}
\label{sec:ising}

We experiment on Ising models with randomly generated interaction strengths between the range \([-\gamma,\gamma]\), where \(\gamma\) is a parameter that we vary. For each Ising model, we compare \(6\) different methods for approximating its partition function: variational inference with mean-field (MF-VI), structured mean-field (SMF-VI), selective-SPNs (SPN-VI), as well as Annealed Importance Sampling (AIS), Loopy Belief Propagation (LBP), and Tree-Reweighted Belief Propagation (TRWBP).

We experiment on Ising models of size up to \(32\times32\) (up to \(1024\) variables and \(1984\) terms), and vary \(\gamma\) from \(2\) to \(14\). For each size and each interaction strength \(\gamma\), we report the estimated partition function for each of the methods, averaged over \(4\) randomly constructed Ising models. 
For the variational methods, we do multiple random restarts. Since each estimate is a lower bound, we report the maximum estimate from all the restarts. For AIS, we used \(5\) intermediate distributions~\citep{neal2001annealed}. For LBP and TRWBP, we used the implementations from libDAI~\citep{Mooij_libDAI_10}. We ran each method for \(30\) minutes, and plot the results in Figure~\ref{fig:ising}. We can see that SPN-VI gives the best approximation of the ground truth for sizes \(4 \times 4\), \(8 \times 8\), and \(16 \times 16\). We do not compute the ground truth for size \(32 \times 32\), but we can see that SPN-VI gives the best lower bound compared to MF-VI and SMF-VI.

\usetikzlibrary{backgrounds}

\newenvironment{customlegend}[1][]{%
    \begingroup
    \csname pgfplots@init@cleared@structures\endcsname
    \pgfplotsset{#1}%
}{%
    \csname pgfplots@createlegend\endcsname
    \endgroup
}%

\def\addlegendimage{\csname pgfplots@addlegendimage\endcsname}
\pgfplotsset{
cycle list={%
{draw=orange,mark=*,mark options={scale=1, fill=orange}},
{draw=magenta,mark=*,mark options={scale=1, fill=magenta}},
{draw=black,mark=*,mark options={scale=1, fill=black},ultra thick},
{draw=green!50!black,mark=square*,mark options={scale=0.8, fill=green!50!black}},
{draw=brown!50!black,mark=square*,mark options={scale=0.8, fill=brown!50!black}},
{draw=blue!50,mark=square*,mark options={scale=0.8, fill=blue!50}},
}}

\begin{figure}
    \begin{tikzpicture}
        \begin{customlegend}[legend columns=6,legend style={align=left,column sep=1.9ex},legend entries={MF-VI,SMF-VI,SPN-VI,LBP,TRWBP,AIS}]
        \addlegendimage{draw=orange,mark=*,mark options={scale=1, fill=orange}}
        \addlegendimage{draw=magenta,mark=*,mark options={scale=1, fill=magenta}}   
        \addlegendimage{draw=black,mark=*,mark options={scale=1, fill=black},ultra thick}
        \addlegendimage{draw=green!50!black,mark=square*,mark options={scale=0.8, fill=green!50!black}}  
        \addlegendimage{draw=brown!50!black,mark=square*,mark options={scale=0.8, fill=brown!50!brown}}  
        \addlegendimage{draw=blue!50,mark=square*,mark options={scale=0.8, fill=blue!50}}  
        \end{customlegend}
    \end{tikzpicture}
    \vspace{10pt}

    \centering
    \begin{subfigure}[t]{0.47\textwidth}
        \centering
        \begin{tikzpicture}[scale=0.75,tight background]
        \begin{axis}[
          width                 = 1.25\linewidth,
          height                = 0.9\linewidth,
          legend style          = {at={(0.05,0.95)},anchor=north west},
          legend columns        = 2,
          ylabel near ticks,
          xlabel                = Interaction Strength,
          ylabel                = Diff of Log of Partition Fn,
          xmin=1,xmax=15,
          ymin=-3,ymax=1.5,
          error bars/y dir      = both,
          error bars/y explicit = true]
        \addplot table [x=Str, y=MFdiff, y error=MFerr]{dat/ising4x4_pos_diff.dat};
        \addlegendentry{MF-VI}
        \addplot table [x=Str, y=SMFdiff, y error=SMFerr]{dat/ising4x4_pos_diff.dat};
        \addlegendentry{SMF-VI}
        \addplot table [x=Str, y=SPNdiff, y error=SPNerr]{dat/ising4x4_pos_diff.dat};
        \addlegendentry{SPN-VI}
        \addplot table [x=Str, y=LBPdiff, y error=LBPerr]{dat/ising4x4_pos_diff.dat};
        \addlegendentry{LBP-VI}
        \addplot table [x=Str, y=TRWBPdiff, y error=TRWBPerr]{dat/ising4x4_pos_diff.dat};
        \addlegendentry{TRWBP-VI}
        \addplot table [x=Str, y=MCMCdiff, y error=MCMCerr]{dat/ising4x4_pos_diff.dat};
        \addlegendentry{AIS}
        \addplot[mark=none, blue, dashed] coordinates {(-100,-0.6931471806) (100,-0.6931471806)};
        \addplot[mark=none, black] coordinates {(-100,0) (100,0)};
        \legend{}
        
        \end{axis}
        \end{tikzpicture}
        \caption{\(4\times4\) Ising model with positive interactions. We plot the difference from ground truth (black line at \(0\)). MF-VI approaches \(ln(1/2)\) (dotted line) since it only covers one of two symmetric modes for the positive interaction case.}
        \label{fig:4x4posdiff}
    \end{subfigure}
    \hspace{0.04\textwidth}
    \begin{subfigure}[t]{0.47\textwidth}
        \centering
        \begin{tikzpicture}[scale=0.75,tight background]
        \begin{axis}[
          width                 = 1.25\linewidth,
          height                = 0.9\linewidth,
          legend style          = {at={(0.05,0.35)},anchor=north west},
          ylabel near ticks,
          xlabel                = Interaction Strength,
          ylabel                = Diff of Log of Partition Fn,
          xmin=1,xmax=15,
          error bars/y dir      = both,
          error bars/y explicit = true]
        \addplot table [x=Str, y=MFdiff, y error=MFerr]{dat/ising8x8_mixed_diff.dat};
        \addlegendentry{MF-VI}
        \addplot table [x=Str, y=SMFdiff, y error=SMFerr]{dat/ising8x8_mixed_diff.dat};
        \addlegendentry{SMF-VI}
        \addplot table [x=Str, y=SPNdiff, y error=SPNerr]{dat/ising8x8_mixed_diff.dat};
        \addlegendentry{SPN-VI}
        \addplot table [x=Str, y=LBPdiff, y error=LBPerr]{dat/ising8x8_mixed_diff.dat};
        \addlegendentry{LBP-VI}
        \addplot table [x=Str, y=TRWBPdiff, y error=TRWBPerr]{dat/ising8x8_mixed_diff.dat};
        \addlegendentry{TRWBP-VI}
        \addplot table [x=Str, y=MCMCdiff, y error=MCMCerr]{dat/ising8x8_mixed_diff.dat};
        \addlegendentry{AIS}
        \legend{}
        
        \addplot[mark=none, black] coordinates {(-100,0) (100,0)};
        \end{axis}
        \end{tikzpicture}
        \caption{\(8\times8\) Ising model with mixed interactions. We plot the difference from ground truth (line at \(0\)).}
        \label{fig:8x8mixeddiff}
    \end{subfigure}
    \\[10pt]
    \begin{subfigure}[t]{0.47\textwidth}
        \centering
        \begin{tikzpicture}[scale=0.75,tight background]
        \begin{axis}[
          width                 = 1.25\linewidth,
          height                = 0.9\linewidth,
          legend style          = {at={(0.05,0.35)},anchor=north west},
          ylabel near ticks,
          xlabel                = Interaction Strength,
          ylabel                = Diff of Log of Partition Fn,
          xmin=1,xmax=15,
          ymax=150,
          error bars/y dir      = both,
          error bars/y explicit = true]
        \addplot table [x=Str, y=MFdiff, y error=MFerr]{dat/ising16x16_mixed_diff.dat};
        \addlegendentry{MF-VI}
        \addplot table [x=Str, y=SMFdiff, y error=SMFerr]{dat/ising16x16_mixed_diff.dat};
        \addlegendentry{SMF-VI}
        \addplot table [x=Str, y=SPNdiff, y error=SPNerr]{dat/ising16x16_mixed_diff.dat};
        \addlegendentry{SPN-VI}
        \addplot table [x=Str, y=LBPdiff, y error=LBPerr]{dat/ising16x16_mixed_diff.dat};
        \addlegendentry{LBP-VI}
        \addplot table [x=Str, y=TRWBPdiff, y error=TRWBPerr]{dat/ising16x16_mixed_diff.dat};
        \addlegendentry{TRWBP-VI}
        \legend{}
        
        \addplot[mark=none, black] coordinates {(-100,0) (100,0)};
        \end{axis}
        \end{tikzpicture}
        \caption{\(16\times16\) Ising model with mixed interactions. We plot the difference from ground truth (line at \(0\)).}
        \label{fig:16x16mixeddiff}
    \end{subfigure}
    \hspace{0.04\textwidth}
    \begin{subfigure}[t]{0.47\textwidth}
        \centering
        \begin{tikzpicture}[scale=0.75,tight background]
        \begin{axis}[
          width                 = 1.25\linewidth,
          height                = 0.9\linewidth,
          legend style          = {at={(0.05,0.35)},anchor=north west},
          ylabel near ticks,
          xlabel                = Interaction Strength,
          ylabel                = Diff of Log of Partition Fn,
          xmin=1,xmax=15,
          ymin=-250,
          error bars/y dir      = both,
          error bars/y explicit = true]
        \addplot table [x=Str, y=MFdiff, y error=MFerr]{dat/ising32x32_mixed_diff.dat};
        \addlegendentry{MF-VI}
        \addplot table [x=Str, y=SMFdiff, y error=SMFerr]{dat/ising32x32_mixed_diff.dat};
        \addlegendentry{SMF-VI}
        \addplot table [x=Str, y=SPNdiff, y error=SPNerr]{dat/ising32x32_mixed_diff.dat};
        \addlegendentry{SPN-VI}
        \legend{}
        
        \addplot[mark=none, black] coordinates {(-100,0) (100,0)};
        \end{axis}
        \end{tikzpicture}
        \caption{\(32\times32\) Ising model with mixed interactions. Ground truth is not computed, so we plot the difference from SPN-VI.}
        \label{fig:32x32mixeddiff}
    \end{subfigure}

    \caption{We compute the log partition function of Ising models of different sizes. We vary the interaction strengths along the x-axis. We plot the difference from ground truth for all grid sizes except for \(32 \times 32\). Mean-Field (\textbf{MF-VI}), Structured Mean-Field (\textbf{SMF-VI}), and Sum-Product-Network (\textbf{SPN-VI}) are lower bounds, while Loopy Belief Propagation (\textbf{LBP}), Tree-Reweighted Belief Propagation (\textbf{TRWBP}), and Annealed Importance Sampling (\textbf{AIS}) are estimates. Points closer to \(0\) are better, and ones too far off are omitted.}
    \label{fig:ising}
\end{figure}

\subsection{Latent Dirichlet Allocation}
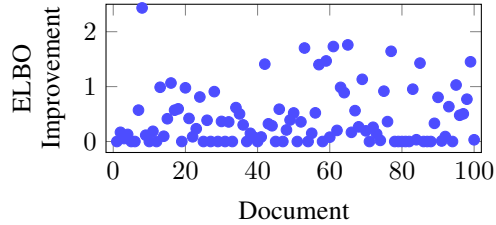
\begin{wrapfigure}{R}{0.48\textwidth}
\vspace{-10pt}
\begin{tikzpicture}[tight background]
\begin{axis}[
          width                 = 0.98\linewidth,
          height                = 0.53\linewidth,
          enlargelimits         = false,
          xmin                  = -2,
          xmax                  = 102,
          ymin                  = -0.2,
          ymax                  = 2.5,
          ylabel near ticks,
          xlabel                = Document,
          ylabel style          = {align=center},
          ylabel                = ELBO\\Improvement,
]
\addplot[
    only marks,
    mark=*,
    mark size=2pt,
    mark options = {blue!70}]
table[meta=improvement]
{dat/lda.dat};
\end{axis}
\end{tikzpicture}
\caption{Improvement of ELBO on LDA when using selective mixtures instead of mean-field.}
\label{fig:lda}
\end{wrapfigure}
Next, we study Latent Dirichlet Allocation~\citep{blei2003latent}. We assume that the prior parameters \(\alpha\) and \(\beta\) are given, and perform inference on the per-document topic distributions \(\gamma\) and word topics \(\phi\). Since the ELBO (cf. Eq. 15 in~\citep{blei2003latent}) only has linear terms on \(\phi\), we used a shallow mixture model with \(16\) selective mixtures -- a special case of selective-SPNs with only one sum node at the root -- as the variational distribution. In Figure~\ref{fig:lda}, we see the improvement of using a selective mixture model over mean-field variational inference for \(100\) documents.

\subsection{UAI Inference Competition}

Lastly, we consider factor graphs from the 2014 UAI Inference Competition. We selected all the factor graphs from the competition with binary variables and no logical constraints. We compare the methods AIS, MF-VI, SMF-VI, SPN-VI, LBP, and TRWBP as shown in Table~\ref{tab:exp}. The experimental setup is similar to that of Section~\ref{sec:ising}. In general, SPN-VI is competitive with LBP, outperforming it on various DBN and Grid graphs. Additionally, SPN-VI gives a stronger lower bound than MF-VI and SMF-VI, while the other methods do not give a bound.

\setul{1.25pt}{0.8pt}
\renewcommand{\arraystretch}{1.25}
\setlength{\tabcolsep}{5pt}
\begin{table}[h]
\center
\small
\caption{Computing the log partition function of factor graphs from the 2014 UAI Inference Competition. The estimate closest to the ground truth is bolded, and the strongest lower bound is underlined. \label{tab:exp}}
\begin{tabular}{cc|cccccc|c}
\toprule
{\bfseries Graph}
& {\bfseries Vars/Terms}
& {\bfseries AIS}
& {\bfseries MF-VI}
& {\bfseries SMF-VI}
& {\bfseries SPN-VI}
& {\bfseries LBP}
& {\bfseries TRWBP}
& {\bfseries G. Truth}
\\
\midrule
Alchm\_11 & 440/4080 & 1219.01 & {\ul {1395.55}} & {\ul {1395.55}} & {\ul {1395.55}} & {\bf {1395.98}} & ---\footnotemark{}  & 1396.01\\
DBN\_11 & 40/1680 & 103.87 & {\ul {134.10}} & {\ul {134.10}} & {\ul {134.10}} & {\bf {134.66}} &  319.33  & 134.77\\
DBN\_12 & 42/1848 & 107.69 &  144.26  &  142.11  & {\ul {144.85}} & {\bf {145.32}} &  339.14  & 145.40\\
DBN\_13 & 44/2024 & 114.55 &  149.55  &  146.43  & {\ul {151.22}} & {\bf {152.25}} &  406.20  & 153.25\\
DBN\_14 & 40/1680 & 168.96 & {\bf \ul {348.05}} & {\bf \ul {348.05}} & {\bf \ul {348.05}} & {\bf {348.05}} &  348.06  & 348.04\\
DBN\_15 & 42/1848 & 167.37 & {\bf \ul {351.40}} & {\bf \ul {351.40}} & {\bf \ul {351.40}} & {\bf {351.40}} &  352.15  & 351.41\\
DBN\_16 & 44/2024 & 178.83 &  378.45  &  378.45  & {\bf \ul {382.48}} &  378.45  &  383.22  & 382.48\\
grid10x10 & 100/920 & 378.79 &  667.53  &  685.94  & {\bf \ul {694.22}} &  798.80  &  908.14  & 697.88\\
Grids\_11 & 100/1000 & 225.59 &  372.49  &  379.24  & {\bf \ul {386.93}} &  407.95  &  487.16  & 390.08\\
Grids\_12 & 100/920 & 404.30 &  664.22  &  681.54  & {\bf \ul {691.34}} &  798.80  &  908.14  & 697.88\\
Grids\_13 & 100/1000 & 446.38 &  735.89  &  742.79  & {\bf \ul {763.40}} &  804.64  &  965.38  & 767.50\\
Grids\_14 & 100/1000 & 555.44 &  1082.01  &  1114.93  & {\bf \ul {1137.85}} &  1200.23  &  1445.42  & 1146.14\\
Grids\_15 & 400/3840 & 392.83 &  632.14  &  639.52  & {\ul {657.99}} & {\bf {677.15}} &  800.23  & 671.74\\
Grids\_16 & 400/3840 & 574.74 &  1452.33  &  1457.20  & {\ul {1471.14}} & {\bf {1551.70}} &  1886.85  & 1531.49\\
Grids\_17 & 400/3840 & 810.67 &  2819.31  &  2830.10  & {\ul {2873.73}} & {\bf {3112.30}} &  3745.26  & 3020.95\\
Grids\_18 & 400/3840 & 1175.98 &  4199.07  &  4238.43  & {\bf \ul {4304.62}} &  4768.06  &  5610.62  & 4519.93\\
relat\_1 & 250/62500 & 720.68 & {\ul {735.10}} & {\ul {735.10}} & {\ul {735.10}} & {\bf {735.20}} &  746.14  & 735.21\\
Seg\_11 & 228/2924 & -318.63 &  -63.45  &  -63.24  & {\ul {-61.46}} & {\bf {-60.50}} &  -44.94  & -55.25\\
Seg\_12 & 229/2946 & -318.00 & {\ul {-23.70}} & {\ul {-23.70}} & {\ul {-23.70}} & {\bf {-23.69}} &  -23.57  & -23.69\\
Seg\_13 & 235/3058 & -319.24 &  -78.42  & {\bf \ul {-78.20}} &  -79.65  &  -79.12  &  -67.38  & -76.83\\
Seg\_14 & 226/2928 & -313.04 &  -105.38  &  -104.55  & {\ul {-92.80}} & {\bf {-91.32}} &  -83.40  & -90.94\\
Seg\_15 & 232/2988 & -366.68 &  -74.90  &  -74.75  & {\ul {-73.72}} &  -72.75  & {\bf {-58.44}} & -60.43\\
Seg\_16 & 231/2994 & -296.69 &  -91.91  &  -116.27  & {\ul {-90.01}} & {\bf {-88.68}} &  -77.39  & -87.79\\
\bottomrule
\end{tabular}
\end{table}
\footnotetext{Non-pairwise factors are not supported by the TRWBP algorithm in libDAI.}

\section{Related Work}

There is a large body of work on estimating variational objectives for discrete settings. Most existing approaches are Monte Carlo methods~\citep{mnih2016variational,maddison2016concrete,Jang2017CategoricalRW,tucker2017rebar,Vahdat2018DVAEDV}, typically relying on continuous relaxations, which introduce bias, or score function estimators~\citep{williams1992simple}, which have high variance. Also of interest are neural variational inference approaches~\citep{kuleshov2017neural} that estimate an upper bound of the partition function, and transformer networks for estimating marginal likelihoods in discrete settings~\citep{wiseman2019amortized}.

The approach of computing the variational lower bounds analytically has generally been restricted to mean-field or structured mean-field~\citep{saul1996exploiting}. One line of work has studied the use of mean-field for exact ELBO computation of Bayesian neural networks~\citep{kandemir2018variational,haussmann2020sampling}. More relevant works also consider graphical models with polynomial log-density, but are still restricted to fully factored variational distributions~\citep{hoffman2018autoconj}. Our result shows that computing the ELBO can be tractable for the more expressive variational family of selective-SPNs.

In probabilistic circuit literature, moments of pairs of PSDDs can be computed in quadratic time on the size of the circuits~\citep{KhosraviNeurIPS19}. Their result may be used to extend our analysis to log-densities parameterized by probabilistic circuits. However, PSDDs are less succinct than selective-SPNs, and the quadratic complexity may not be practical inside an optimization loop.

Most relevant to our work is the approach in~\citep{lowd2010approximate} of compiling selective-SPNs (also known as arithmetic circuits) for variational optimization. Our work differs from~\citep{lowd2010approximate} in two ways. First, we forgo the costly step of learning the SPN structure, and instead construct the SPN structure procedurally without considering the target graphical model. Second, we provide a more efficient computation of the variational lower bound. In particular, our method computes the gradient of the entropy in \(O(m)\) -- an improvement over their method which takes \(O(m^2)\) operations, where \(m\) is the size of the SPN. Since the variational lower bound improves with SPN size (with \(m\) up to \(1e5\) in our experiments), the speedup in calculating the entropy gradients is important. 

\section{Conclusion}

We study the problem of variational inference for discrete graphical models. While many Monte Carlo techniques have been proposed for estimating the ELBO, they can suffer from high bias or variance due to the inability to backpropagate through samples of discrete variables. As such, in this work we examine the less-explored alternative approach of computing the ELBO analytically. 

We show that we can obtain exact ELBO calculations for graphical models with polynomial log-density by leveraging the properties of probabilistic circuit models. We demonstrate the use of selective-SPNs as the variational distribution, which leads to a tractable ELBO computation while being more expressive than mean-field or structured mean-field. To improve scalability and ease of use, we propose a linear time construction of selective-SPNs. Our experiments on computing a lower bound of the log partition function of various graphical models show significant improvement over the other variational methods, and is competitive with approximation techniques using belief propagation. These findings suggest that probabilistic circuits can be useful tools for inference in discrete graphical models due to their combination of tractability and expressivity.

\section*{Broader Impact}
Our contributions are broadly aimed at improving approximate inference in graphical models. Our research could be used to develop more scalable and more accurate inference methods for machine learning models in general. As seen in our experiments, this includes handling models for physics, word/topic analysis, or image segmentation. Scaling to even bigger models can open up even more potential applications.

\section*{Acknowledgments}
We thank Antonio Vergari and Daniel Lowd for helpful discussions and clarifications regarding~\citep{lowd2010approximate}. Research supported by NSF (\#1651565, \#1522054, \#1733686), ONR (N00014-19-1-2145), AFOSR (FA9550-19-1-0024), and FLI.

\newpage
\bibliographystyle{plainnat}
\bibliography{main}

\newpage
\appendix

\section{Proof of Theorem 2}
\begin{proof}
First, we prove that the constructed circuit is a selective-SPN by showing it is selective and decomposable. We maintain these invariants at each layer: 1) the array \(D\) store \(cn\) nodes where \(c\) is the number of nodes per partition and \(n\) is the number of partitions and 2) the \(c\) nodes of each partition have disjoint support. The invariants hold trivially at the leaf layer with \(c=2\). At a product layer, we do a Cartesian product between the \(c\) nodes of partition \(2i\) and the \(c\) nodes of partition \(2i+1\) for \(i \in [0,n/2)\) (lines \(13\)-\(17\)). Since the children of each partition have disjoint support, the Cartesian product of children from two partitions forms \(c^2\) product nodes with disjoint support. At a sum layer, each sum node is a mixture of \(r\) children and each child is assigned to only one sum node, so setting \(c \gets c/r\) maintains invariant \(1\) and \(2\) (lines \(8\)-\(11\)). Therefore, each sum node is selective, each product node is decomposable, and the constructed circuit is a valid selective-SPN.

To finish, we show that the constructed selective-SPN has size \(O(kn)\). At every sum layer, we set \(r\) so that there are at most \(k^{1/2}\) sum nodes per partition. At every product layer, we have \(c^2 \leq k\) product nodes per partition (line \(7\)). It can be seen that there cannot be more than \(2\) consecutive sum layers, and that at every product layer the number of partitions halves. The total number of nodes/edges is therefore \(O(k(n+\frac{n}{2} + \ldots + 1)) = O(kn)\). Each node/edge can be constructed in constant time, so the time complexity also scales as \(O(kn)\).
\end{proof}

\end{document}